\newcommand{\toprightarrow}[1]{\mathord{\buildrel{\lower3pt\hbox{$\scriptscriptstyle\rightarrow$}}\over#1} }
\newcommand{\topleftarrow}[1]{\mathord{\buildrel{\lower3pt\hbox{$\scriptscriptstyle\leftarrow$}}\over#1} }
\newtheorem{myDef}{Definition}
\newtheorem{myProp}{Proposition}
\title{A Span-based Linearization for Constituent Trees}
\author{Yang Wei, Yuanbin Wu, and Man Lan \\
    School of Computer Science and Technology \\
    East China Normal University \\
    \texttt{godweiyang@gmail.com}\quad \texttt{\{ybwu,mlan\}@cs.ecnu.edu.cn}\\}
\date{}
\begin{document}
\begin{CJK*}{GBK}{song}
    \maketitle
    \begin{abstract}
        We propose a novel linearization of a constituent tree,
        together with a new locally normalized model.
        For each split point in a sentence,
        our model computes the normalizer on all spans ending with that split point,
        and then predicts a tree span from them.
        Compared with global models,
        our model is fast and parallelizable.
        Different from previous local models,
        our linearization method is tied on the spans directly and
        considers more local features when performing span prediction,
        which is more interpretable and effective.
        Experiments on PTB (95.8 F1) and CTB (92.4 F1) show that our model
        significantly outperforms existing local models
        and efficiently achieves competitive results with global models.
    \end{abstract}

    \section{Introduction}
    \label{Sec:Introduction}
    Constituent parsers map natural language sentences to hierarchically organized spans
    \citep{DBLP:conf/emnlp/CrossH16}.
    According to the complexity of decoders, two types of parsers have been studied,
    globally normalized models which normalize probability of a constituent tree on
    the whole candidate tree space (e.g. chart parser \citep{DBLP:conf/acl/SternAK17})
    and locally normalized models which normalize tree probability on smaller subtrees or spans.
    It is believed that global models have better parsing performance \citep{DBLP:conf/naacl/GaddySK18}.
    But with the fast development of neural-network-based feature representations
    \citep{DBLP:journals/neco/HochreiterS97, DBLP:conf/nips/VaswaniSPUJGKP17},
    local models are able to get competitive parsing accuracy while enjoying fast training
    and testing speed, and thus become an active research topic in constituent parsing.

    Locally normalized parsers usually rely on tree decompositions or linearizations.
    From the \textbf{perspective of decomposition}, the probability of trees can be factorized,
    for example, on individual spans.
    \citet{DBLP:conf/coling/TengZ18} investigates such a model
    which predicts probability on each candidate span.
    It achieves quite promising parsing results, while the simple local probability
    factorization still leaves room for improvements.
    From the \textbf{perspective of linearization}, there are many ways to transform a structured
    tree into a shallow sequence.
    As a recent example, \citet{DBLP:conf/acl/BengioSCJLS18} linearizes a tree with a
    sequence of numbers, each of which indicates words' syntactic distance in the tree
    (i.e., height of the lowest common ancestor of two adjacent words).
    Similar ideas are also applied in
    \citet{DBLP:conf/nips/VinyalsKKPSH15}, \citet{DBLP:conf/emnlp/ChoeC16}
    and transition-based systems \citep{DBLP:conf/emnlp/CrossH16, DBLP:journals/tacl/LiuZ17a}.
    With tree linearizations,
    the training time can be further accelerated to $\mathcal{O}(n)$,
    but the parsers often sacrifice a clear connection with original spans in trees,
    which makes both features and supervision signals from spans hard to use.

    \begin{figure*}[t]
        \centering
        \subfigure[Original parsing tree.]{
            \begin{minipage}[t]{0.272\textwidth}
                \includegraphics[width=\textwidth]{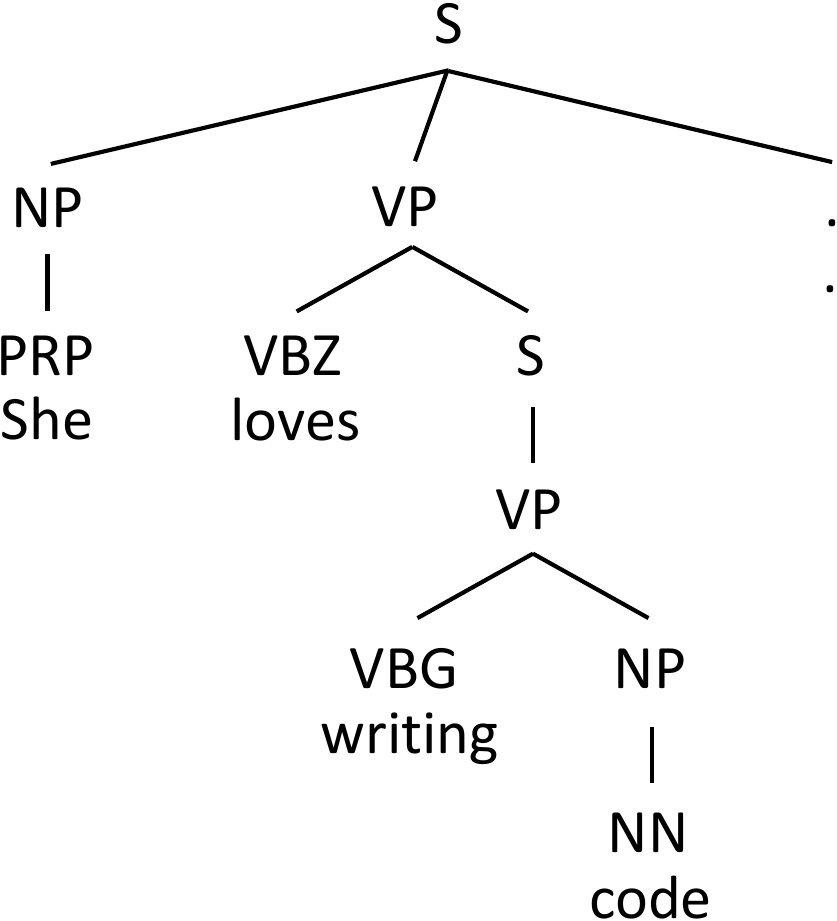}
            \end{minipage}
            \label{Fig:example_1}
        }
        \subfigure[Right binary tree.]{
            \begin{minipage}[t]{0.287\textwidth}
                \includegraphics[width=\textwidth]{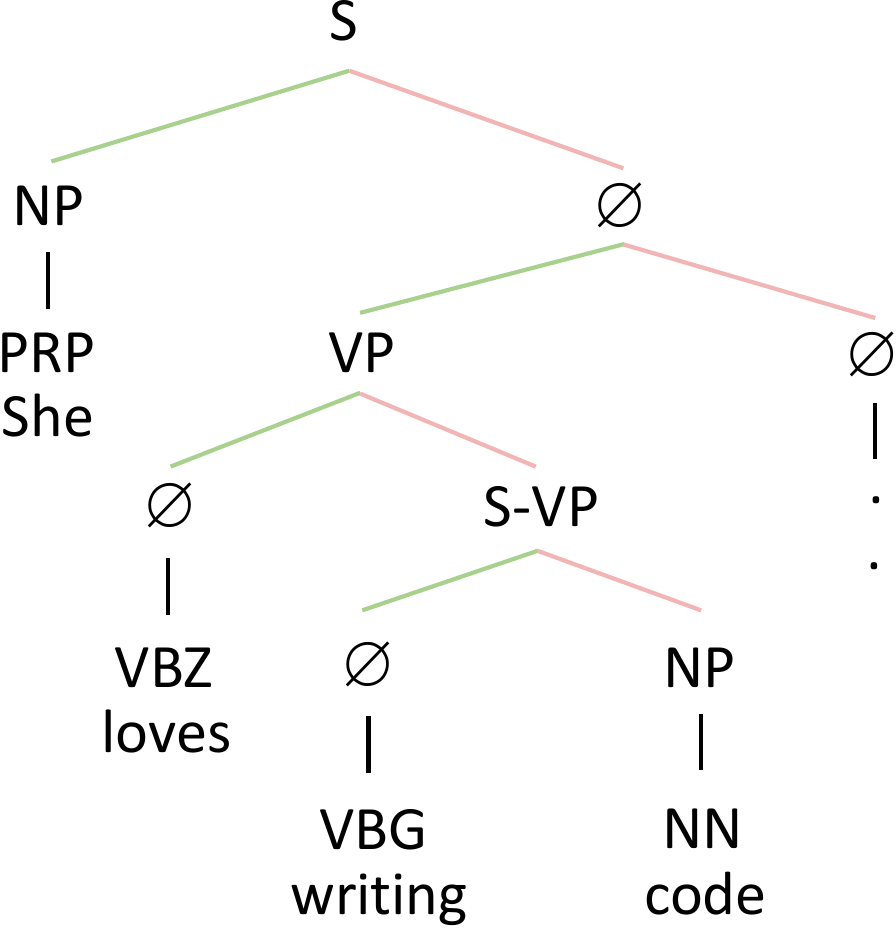}
            \end{minipage}
            \label{Fig:example_2}
        }
        \subfigure[Span table and linearization.]{
            \begin{minipage}[t]{0.387\textwidth}
                \includegraphics[width=\textwidth]{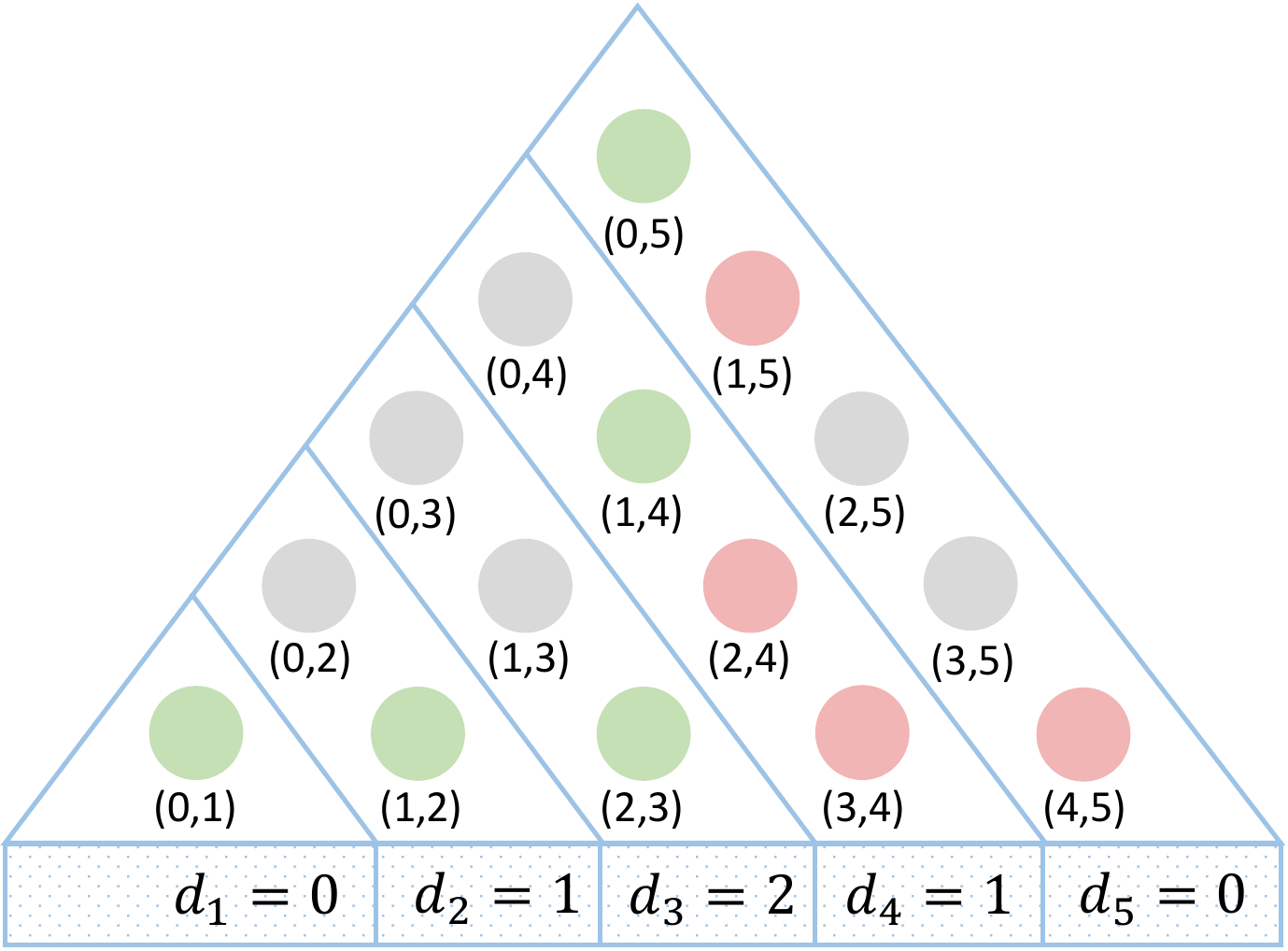}
            \end{minipage}
            \label{Fig:example_3}
        }
        \caption{The process of generating the linearization
            of the sentence ``\emph{She loves writing code .}''.
            Given an original parsing tree (a), we firstly convert it to a right binary tree
            by recursively combining the rightmost two children (b).
            Then, we represent the tree as a span table, and divide it into five parts according to
            the right boundaries of the spans (c).
            Green and red circles represent left and right child spans respectively.
            Gray circles represent spans which do not appear in the tree.
            In each part, there is only one longest span (green circles),
            thus the corresponding value of that part is just the left boundary of the green circle.}
        \label{Fig:example}
    \end{figure*}

    In this work, we propose a novel linearization of constituent trees
    tied on their span representations.
    Given a sentence $\mathcal{W}$ and its parsing tree $\mathcal{T}$,
    for each split point after $w_i$ in the sentence, we assign it a parsing target $d_i$,
    where $(d_i, i)$ is the longest span ending with $i$ in $\mathcal{T}$.
    We can show that, for a binary parsing tree, the set
    $\{(d_i, i)\}$ includes all left child spans in $\mathcal{T}$.
    Thus the linearization is actually sufficient to recover a parsing
    tree of the sentence.

    Compared with prior work, the linearization is directly based on tree spans,
    which might make estimating model parameters easier.
    We also build a different local normalization compared with
    the simple per-span-normalization in \citet{DBLP:conf/coling/TengZ18}.
    Specifically, the probability $P(d_i | i)$ is normalized on
    all candidate split points on the left of $i$.
    The more powerful local model can help to further improve parsing performance
    while retaining the fast learning and inference speed
    (with a greedy heuristic for handling illegal sequences,
    we can achieve $\mathcal{O}(n \log n)$ average inference complexity).
    We perform experiments on PTB and CTB.
    The proposed parser significantly outperforms existing locally normalized models,
    and achieves competitive results with state-of-the-art global models
    (95.8 F1 on PTB and 92.4 F1 on CTB).
    We also evaluate how the new linearization helps parse spans with
    different lengths and types.

    To summarize, our main contributions include:
    \begin{itemize}
        \item Proposing a new linearization which has clear interpretation
              (Section \ref{Sec:Preliminaries}).
        \item Building a new locally normalized model with constraints on span scores
              (Section \ref{Sec:OurModel}).
        \item Compared with previous local models, the proposed parser achieves
              better performance (competitive with global models) and has faster parsing speed
              (Section \ref{Sec:Experiments}).
    \end{itemize}

    \section{Tree Linearization}
    \label{Sec:Preliminaries}
    We first prepare some notations.
    Let $\mathcal{W} = (w_1, w_2, \ldots, w_n)$ be a sentence,
    $\mathcal{T}$ be its binary constituent tree and
    $A_{ij} \to B_{ik} C_{kj}$ be a derivation in $\mathcal{T}$.
    Denote $(i, j) (0 \le i < j \le n)$ to be a span from $w_{i+1}$ to $w_{j}$
    (for simplicity, we ignore the label of a span).

    \begin{myDef}
        Given a sentence $\mathcal{W}$ and its tree $\mathcal{T}$,
        we call $\mathcal{D} = (d_1, d_2, \ldots, d_n)$ a linearization of $\mathcal{T}$,
        where $d_i \in \{0, 1, \ldots, i-1\}$
        and $(d_i, i)$ is the longest span ending with $i$ in $\mathcal{T}$.
    \end{myDef}

    Clearly, there is only one such linearization for a tree.
    We have an equal definition of $\mathcal{D}$,
    which shows the span $(d_i, i)$ is a left child span.

    \begin{myProp}
        \label{Prop:left_span}
        Given a tree $\mathcal{T}$, the set of spans
        $\{(d_i, i) \mid i = 1, 2, \ldots, n\}$ is equal to the set of left child spans
        \footnote{The root node is also regarded as a left child span.}
        \[
            \mathcal{S} = \{(i, j) \mid \exists A_{ik} \to B_{ij} C_{jk}\} \cup \{(0, n)\}.
        \]
    \end{myProp}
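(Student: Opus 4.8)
The plan is to prove the two inclusions $\{(d_i,i) \mid i=1,\ldots,n\} \subseteq \mathcal{S}$ and $\mathcal{S} \subseteq \{(d_i,i) \mid i=1,\ldots,n\}$ separately. The single structural fact I will rely on throughout is that the spans occurring as nodes of $\mathcal{T}$ form a laminar family: any two of them are either nested or disjoint, because $\mathcal{T}$ is a tree. I will also use that, among all spans of $\mathcal{T}$ ending at a fixed right boundary, the longest one is unique (their lengths are pairwise distinct) and exists (the leaf span $(i-1,i)$ always ends at $i$), so $d_i$ is well defined.

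First I would show $\{(d_i,i)\} \subseteq \mathcal{S}$. Fix $i$ and consider $(d_i,i)$, the longest span of $\mathcal{T}$ ending at $i$. If $(d_i,i)=(0,n)$ it lies in $\mathcal{S}$ by the explicit term $\{(0,n)\}$, so assume $(d_i,i)$ is not the root and let it be a child in some derivation. If it were the \emph{right} child, the derivation would read $A_{p i} \to B_{p d_i} C_{d_i i}$, and then the parent span $(p,i)$ also ends at $i$ with $p<d_i$, contradicting the maximality of $(d_i,i)$. Hence $(d_i,i)$ is a left child span and therefore belongs to $\mathcal{S}$.

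For the converse $\mathcal{S} \subseteq \{(d_i,i)\}$, the root $(0,n)$ is the longest span of $\mathcal{T}$ ending at $n$, so $(0,n)=(d_n,n)$. Now take a left child span $(i,j)$ arising from a derivation $A_{ik}\to B_{ij}C_{jk}$ with $k>j$; I claim $(i,j)=(d_j,j)$. Suppose not: then some node $(i',j)$ of $\mathcal{T}$ with $i'<i$ also ends at $j$. By laminarity $(i,j)\subsetneq(i',j)$, so $(i,j)$ is a proper descendant of $(i',j)$. Walk down the tree path from $(i',j)$ to $(i,j)$: every span $(a,b)$ on this path satisfies $(i,j)\subseteq(a,b)\subseteq(i',j)$, which forces $b=j$, so every span on the path ends at $j$. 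In particular the parent of $(i,j)$ ends at $j$. But the parent of $(i,j)$ in $\mathcal{T}$ is the span $(i,k)$ with $k>j$, which does not end at $j$ — a contradiction. Hence $(i,j)=(d_j,j)$, completing the inclusion. (As a side remark, this also matches the cardinalities: the right boundaries $1,\ldots,n$ are distinct, so $|\{(d_i,i)\}|=n$, and the $n-1$ derivations contribute $n-1$ distinct left children, plus the root, giving $|\mathcal{S}|=n$.)

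I expect the converse inclusion to be the main obstacle: one must rule out a strictly longer span sharing the right boundary $j$ with a left child span $(i,j)$. The crux is the path-following observation — once one descends from an ancestor while keeping the right endpoint fixed at $j$, one is forced to take right children at every step, which is incompatible with $(i,j)$ being a left child (whose parent properly extends past $j$ on the right). Everything else (laminarity, the forward direction's one-line parent argument, and the cardinality bookkeeping) is routine.
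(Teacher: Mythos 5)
Your proof is correct and follows essentially the same route as the paper's: the crux in both is that any tree span ending at $j$ other than the longest one $(d_j,j)$ must be a right child (its parent also ends at $j$), while the longest one cannot be a right child, so the left child spans are exactly the spans $(d_j,j)$. You merely spell out, via laminarity and the path-descent argument, the steps the paper's terser proof leaves implicit.
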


    \begin{proof}
        First, for each $j$, there is only one left child span $(i, j)$ ending with $j$,
        otherwise if $(i', j)$ is a left child span with $i' \neq i$ (e.g. $i' < i$),
        $(i, j)$ must also be a right child span.
        Therefore $|\mathcal{S}| = n$.
        Similarly, if $i \neq d_j$, $(i, j)$ should be a right child span of $(d_j, j)$.
    \end{proof}

    Thus we can generate the linearization using Algorithm \ref{Alg:T2L}.
    For span $(i, j)$ and its gold split $k$, we can get $d_k = i$.
    Then we recursively calculate the linearization of span $(i, k)$ and $(k, j)$.
    Note that the returned linearization $\mathcal{D}$ does not contain $d_n$,
    so we append zero ($d_n = 0$ for the root node) to the end as the final linearization.
    Figure \ref{Fig:example} is a generation process of sentence
    ``\emph{She loves writing code .}''.
    From the span table, it is obvious that
    there is only one left child span (green circles)
    ending with the same right boundary.

    In the following discussions, we will use $\mathcal{D}$ and $\mathcal{S}$ interchangeably.
    Next, we show two properties of a legal $\mathcal{D}$.

    \begin{myProp}
        \label{Prop:legal}
        A linearization $\mathcal{D}$ can recover a tree $\mathcal{T}$ iff.
        \begin{enumerate}
            \item {$0 \le d_i < i, \forall 1 \le i \le n$.}
            \item {$d_j$ is not in the range $(d_i, i)$, $\forall j > i$.}
        \end{enumerate}
    \end{myProp}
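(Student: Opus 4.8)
The plan is to prove the two implications separately, leaning on Proposition~\ref{Prop:left_span} in both directions; throughout I read a span $(a,b)$ as the interval $\{a+1,\dots,b\}$, so that two spans are non-crossing exactly when one contains the other or they are disjoint.

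For the ``only if'' direction, suppose $\mathcal{D}$ is the linearization of a tree $\mathcal{T}$. Condition~1 is immediate from the definition ($d_i\in\{0,\dots,i-1\}$ gives $0\le d_i<i$). For condition~2, fix $i<j$. By Proposition~\ref{Prop:left_span} both $(d_i,i)$ and $(d_j,j)$ lie in $\mathcal{S}$, hence are constituents of $\mathcal{T}$, hence are non-crossing. Since their right endpoints satisfy $i<j$, being non-crossing forces either $(d_j,j)\supseteq(d_i,i)$, i.e.\ $d_j\le d_i$, or disjointness, i.e.\ $d_j\ge i$; in both cases $d_j\notin(d_i,i)$. (We also get $d_n=0$, since the longest span ending at $n$ is the root; this is used below.)

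For the ``if'' direction I would build $\mathcal{T}$ from $\mathcal{D}$ (taking $d_n=0$) with the natural top-down decoder: to expand a span $(l,r)$ with $r-l\ge 2$, set $k=\max\{j\mid l<j<r,\ d_j=l\}$ and recurse on the left child $(l,k)$ and the right child $(k,r)$; a span of length $1$ is a leaf. The decoder clearly terminates and produces a binary tree, \emph{provided} the defining set of $k$ is always nonempty. I would establish this, together with the invariant
\[
    d_j\ge l\quad\text{for all } l<j<r,
\]
by induction on $r-l$ over the spans that actually arise (the root call $(0,n)$ satisfies the invariant trivially). Granting the invariant, nonemptiness is easy: $l+1\in(l,r)$ and $l\le d_{l+1}<l+1$ force $d_{l+1}=l$. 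The invariant passes to the left child $(l,k)$ for free since $(l,k)\subseteq(l,r)$. The right child $(k,r)$ is the delicate case: for $k<j<r$ the inherited invariant gives $d_j\ge l$, maximality of $k$ gives $d_j\ne l$, so $d_j>l=d_k$, and then condition~2 applied to the pair $k<j$ yields $d_j\notin(d_k,k)=(l,k)$, hence $d_j\ge k$ — exactly the invariant for $(k,r)$. I expect this right-child step, and in particular pinning down the correct invariant, to be the main obstacle; everything else is bookkeeping.

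It then remains to check that the decoded tree $\mathcal{T}$ has linearization $\mathcal{D}$. Whenever the decoder splits $(l,r)$ at $k$ the left child is $(l,k)=(d_k,k)$ because $d_k=l$; together with the root $(0,n)=(d_n,n)$ this shows every element of the set $\mathcal{S}$ of $\mathcal{T}$ (left-child spans plus root) has the form $(d_i,i)$. By Proposition~\ref{Prop:left_span} this set has exactly $n$ elements with pairwise distinct right endpoints, so it must coincide with $\{(d_i,i)\mid i=1,\dots,n\}$; identifying $\mathcal{D}$ with $\mathcal{S}$ as in the paper, the linearization of $\mathcal{T}$ is $\mathcal{D}$, which finishes the proof.
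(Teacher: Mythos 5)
Your proof is correct and follows essentially the same route as the paper's: your invariant $d_j \ge l$ on each recursive call of the decoder, established from the maximality of $k$ together with condition~2, is exactly the paper's inductive step (there phrased as splitting at the last index $k$ with $d_k = 0$ and showing every remaining span lies inside $(0,k)$ or $(k,n)$). The extra detail you supply --- the non-crossing argument for necessity, which the paper dismisses as obvious, and the final check that the decoded tree's linearization is indeed $\mathcal{D}$ --- is sound and merely makes the same argument more explicit.
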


    \begin{algorithm}[t]
        \caption{ Tree linearization. }\label{Alg:T2L}
        \begin{algorithmic}[1]
            \Function {Linearization}{$i, j, \mathcal{T}$}
            \If {$i + 1 = j$}
            \State $\mathcal{D} \gets []$
            \Else
            \State $k \gets $ the split point of span $(i, j)$ in $\mathcal{T}$
            \State $\mathcal{D}^l \gets $ \Call{Linearization}{$i, k, \mathcal{T}$}
            \State $\mathcal{D}^r \gets $ \Call{Linearization}{$k, j, \mathcal{T}$}
            \State $\mathcal{D} \gets \mathcal{D}^l \oplus [i] \oplus \mathcal{D}^r$
            \EndIf
            \State \Return $\mathcal{D}$
            \EndFunction
        \end{algorithmic}
    \end{algorithm}

    \begin{proof}
        The necessity is obvious.
        We show the sufficiency by induction on the sentence length.
        When $n = 1$, the conclusion stands.
        Assuming for all linearizations with length less than $n$,
        property 1 and 2 lead to a well-formed tree,
        and now consider a linearization with length $n$.

        Define $k = \max\{k' \mid d_{k'} = 0, k' < n\}$.
        Since $d_1 = 0$ (by property 1), $k$ is not none.
        We split the sentence into $(0, k)$, $(k, n)$,
        and claim that after removing $(0, n)$,
        the spans in $\mathcal{D}$ are either in $(0, k)$ or $(k, n)$,
        thus by induction we obtain the conclusion.
        To validate the claim, for $k' < k$, by property 1, we have $d_{k'} < k' < k$,
        thus $(d_{k'}, k')$ is in $(0, k)$.
        For $k' > k$, by property 2, either $d_{k'} \ge k$ or $d_{k'} = 0$.
        Since $k$ is the largest index with $d_k = 0$, we have $d_{k'} \neq 0$,
        which means $(d_{k'}, k')$ is in $(k, n)$.
        Therefore, we show the existence of a tree from $\mathcal{D}$.
        The tree is also unique, because if two trees $\mathcal{T}$ and $\mathcal{T'}$
        have the same linearization, by Proposition \ref{Prop:left_span},
        we have $\mathcal{T} = \mathcal{T'}$.
    \end{proof}

    Proposition \ref{Prop:legal} also suggests a top-down algorithm (Algorithm \ref{Alg:L2T})
    for performing tree inference given a \emph{legal linearization}.
    For span $(i, j)$ (with label $\ell(i, j)$),
    we find the rightmost split $k$ satisfying $d_k = i$,
    and then recursively decode the two sub-trees rooted at span $(i, k)$ and $(k, j)$, respectively.
    When $\mathcal{D}$ does not satisfy property 2
    (our model can ensure property 1),
    one solution is to seek a minimum change of $\mathcal{D}$ to make it legal.
    However, it is reduced to a minimum vertex cover problem
    (regarding each span $(d_i, i)$ as a point,
    if two spans violate property 2, we connect an edge between them.
    ).
    We can also slightly modify Algorithm \ref{Alg:L2T} to perform an approximate inference
    (Section \ref{Sec:TreeInference}).


    Finally we need to deal with the linearization of non-binary trees.
    For spans having more than two child spans,
    there is no definition for their middle child spans
    whether they are left children or right children,
    thus Proposition \ref{Prop:left_span} might not stand.
    We recursively combine two adjacent spans from right to left using an empty label $\varnothing$.
    Then the tree can be converted to a binary tree \citep{DBLP:conf/acl/SternAK17}.
    For a unary branch, we treat it as a unique span with a new label
    which concatenates all the labels in the branch.

    \begin{algorithm}[t]
        \caption{ Tree reconstruction. }\label{Alg:L2T}
        \begin{algorithmic}[1]
            \Function {Tree}{$i, j, \mathcal{D}$}
            \If {$i + 1 = j$}
            \State $\text{node} \gets \text{Leaf}(w_j, \ell(i, j))$
            \Else
            \State $k \gets \max{\{k' \mid d_{k'} = i, i < k' < j\}}$
            \State $\text{child}_l \gets $ \Call{Tree}{$i, k, \mathcal{D}$}
            \State $\text{child}_r \gets $ \Call{Tree}{$k, j, \mathcal{D}$}
            \State $\text{node} \gets \text{Node}(\text{child}_l, \text{child}_r, \ell(i, j))$
            \EndIf
            \State \Return $\text{node}$
            \EndFunction
        \end{algorithmic}
    \end{algorithm}

    \section{The Parser}
    \label{Sec:OurModel}
    In this section, we introduce our encoder, decoder and inference algorithms
    in detail.
    Then we compare our normalization method with two other methods,
    globally normalized and existing locally normalized methods.

    \subsection{Encoder}
    \label{Sec:Encoder}
    We represent each word $w_i$ using three pieces of information,
    a randomly initialized word embedding $\bm{e}_i$,
    a character-based embedding $\bm{c}_i$ obtained by a character-level LSTM
    and a randomly initialized part-of-speech tag embedding $\bm{p}_i$.
    We concatenate these three embeddings to generate a representation of word $w_i$,
    \[
        \bm{x}_i = [\bm{e}_i; \bm{c}_i; \bm{p}_i].
    \]

    To get the representation of the split points, the word representation matrix
    ${\rm\bf{X}} = [\bm{x}_1, \bm{x}_2, \ldots, \bm{x}_n]$ is fed into a
    bidirectional LSTM or Transformer \citep{DBLP:conf/nips/VaswaniSPUJGKP17} firstly.
    Then we calculate the representation of the split point between $w_i$ and $w_{i+1}$
    using the outputs from the encoders,
    \begin{equation}
        \label{Eq:encoder_output}
        \bm{h}_i = [\bm{\toprightarrow{h}}_i; \bm{\topleftarrow{h}}_{i+1}].
    \end{equation}
    Note that for Transformer encoder, $\bm{\toprightarrow{h}}_i$
    is calculated in the same way as \citet{DBLP:conf/acl/KleinK18}.

    \begin{figure*}[t]
        \centering
        \subfigure[Global normalization.]{
            \begin{minipage}[t]{0.317\textwidth}
                \includegraphics[width=\textwidth]{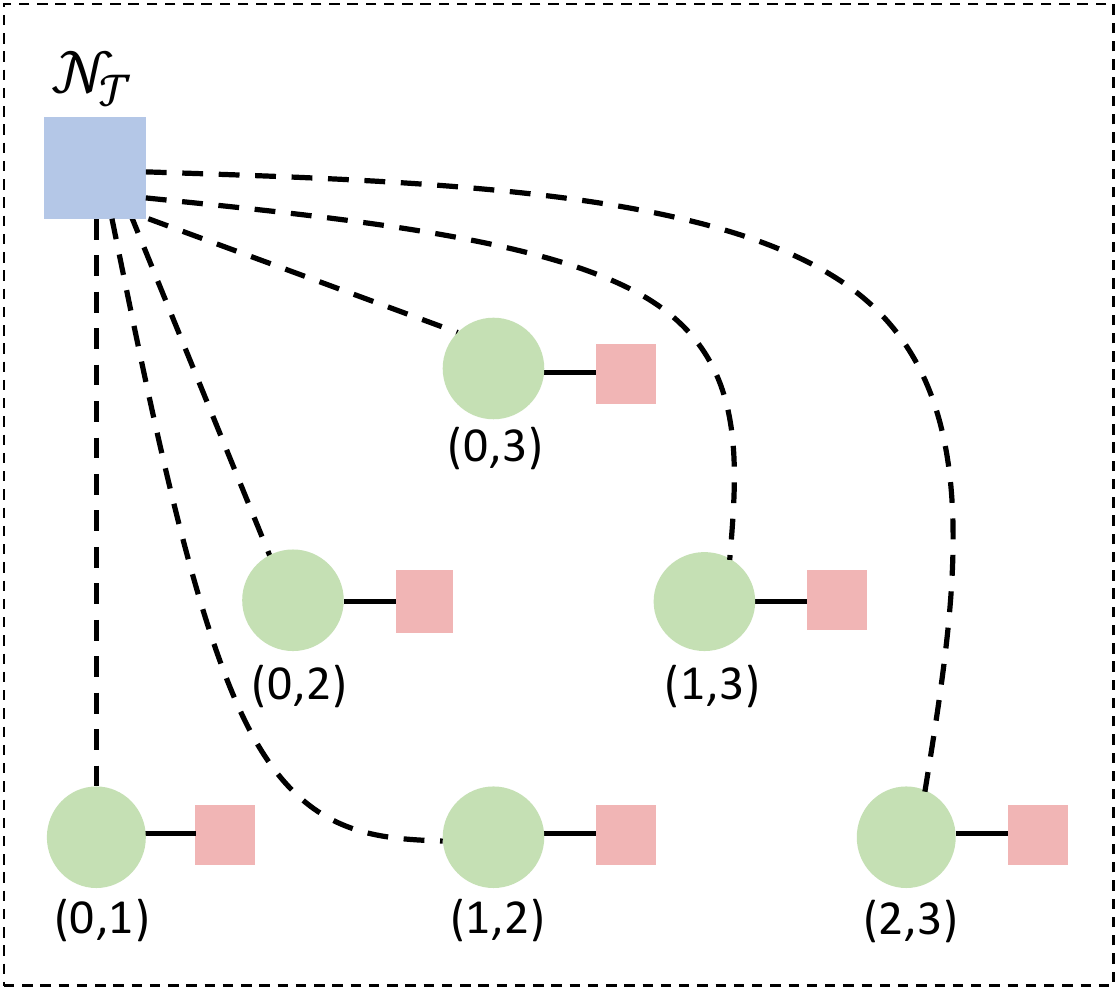}
            \end{minipage}
            \label{Fig:model_1}
        }
        \subfigure[Local normalization.]{
            \begin{minipage}[t]{0.317\textwidth}
                \includegraphics[width=\textwidth]{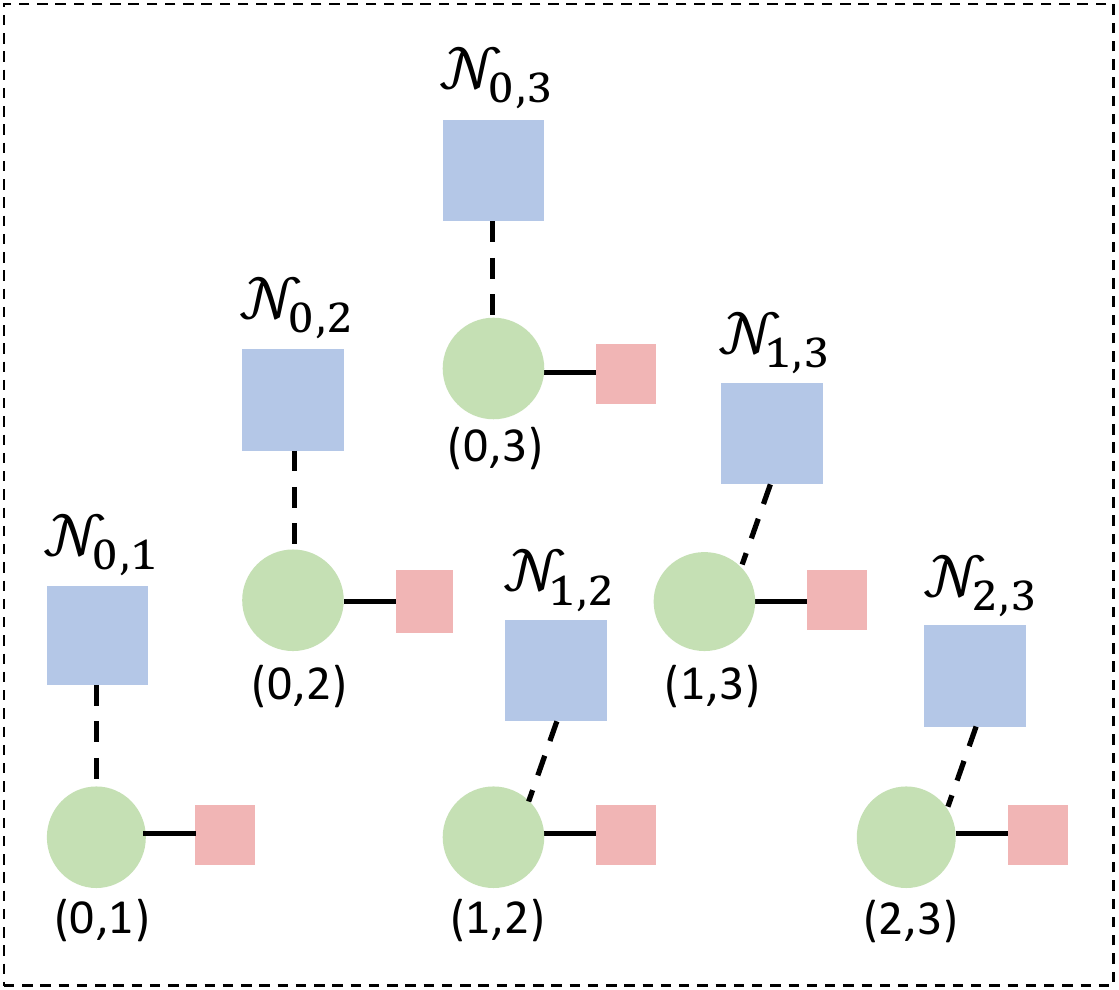}
            \end{minipage}
            \label{Fig:model_2}
        }
        \subfigure[Our normalization.]{
            \begin{minipage}[t]{0.317\textwidth}
                \includegraphics[width=\textwidth]{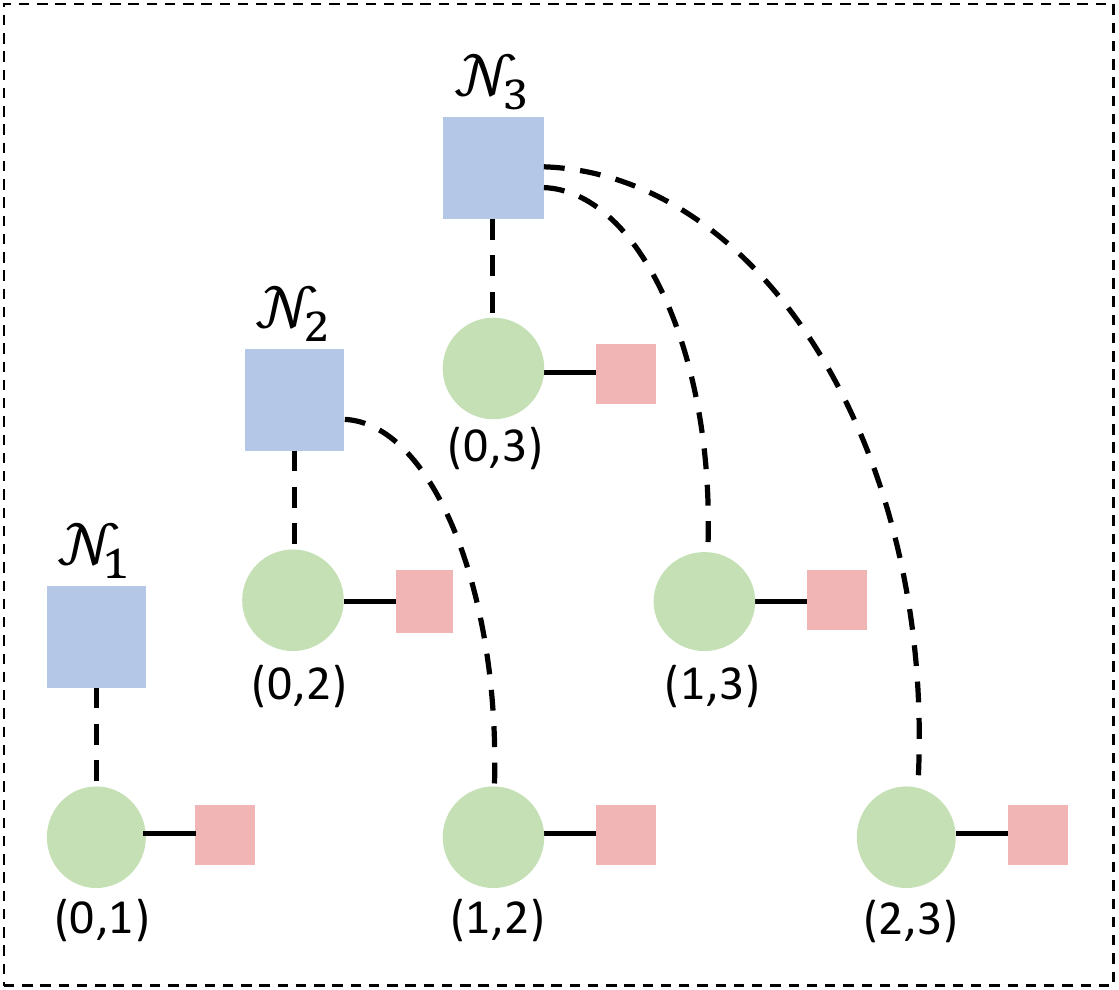}
            \end{minipage}
            \label{Fig:model_3}
        }
        \caption{Factor graphs of three types of normalization.
            Green circles represent all potential spans in the span table.
            Red blocks represent scores of the spans.
            Blue blocks represent normalization operations and dotted lines
            connect all the spans involved in the normalization.
            Global normalization (a) needs to calculate the sum of all span scores
            in parsing tree $\mathcal{T}$.
            Existing local normalization (e.g. \citet{DBLP:conf/coling/TengZ18}) (b)
            only calculates the probability of each candidate span.
            Our method (c) does local normalization on all the spans with the same right boundary. }
        \label{Fig:model}
    \end{figure*}

    \subsection{Decoder}
    \label{Sec:Decoder}
    Since a split point can play two different roles when it is the left
    or right boundary of a span,
    we use two different vectors to represent the two roles
    inspired by \citet{DBLP:conf/iclr/DozatM17}.
    Concretely, we use two multi-layer perceptrons to generate two different representations,
    \begin{equation}
        \label{Eq:splitpointrep}
        \bm{l}_i = \text{MLP}_l(\bm{h}_i),\quad \bm{r}_i = \text{MLP}_r(\bm{h}_i).
    \end{equation}

    Then we can define the score
    of span $(i, j)$ using a biaffine attention function \citep{DBLP:conf/iclr/DozatM17, DBLP:conf/ijcai/LiLZWLS19},
    \[
        \alpha_{ij} = \bm{l}_i^\top {\bf\bm{W}} \bm{r}_j + \bm{b}_1^\top \bm{l}_i + \bm{b}_2^\top \bm{r}_j,
    \]
    where ${\bf\bm{W}}$, $\bm{b}_1$ and $\bm{b}_2$ are all model parameters.
    $\alpha_{ij}$ measures the possibility of $(i, j)$ being a left child span in the tree.

    Different from \citet{DBLP:conf/acl/SternAK17} which does global normalization on the
    probability of the whole tree and \citet{DBLP:conf/coling/TengZ18} which
    does local normalization on each candidate span,
    we do normalization on all spans with the same right boundary $j$.
    Thus the probability of span $(i, j)$ to be a left child span is defined as,
    \begin{equation}
        P(i|j) = \text{Softmax}_{i}(\alpha_{ij}), \forall i < j.
        \label{Eq:normalization}
    \end{equation}
    Finally, we can predict the linearization using the probability $P(i|j)$,
    \begin{equation}
        \label{Eq:pred_linearization}
        d_j = \mathop{\arg\max}_{i} {P(i|j)}, \forall i < j.
    \end{equation}

    For label prediction,
    we first infer the tree structure from the linearization
    (Section \ref{Sec:TreeInference}).
    \footnote{
        Note that we would perform label prediction
        without the tree inference step
        which will train the entire parser in linear time as
        sequence labelling models
        \citep{DBLP:conf/emnlp/Gomez-Rodriguez18},
        but we empirically find that the tree structure helps
        improving the label classifier.
    }
    Then we use a multi-layer perceptron
    to calculate the label probability of span $(i, j)$,
    \[
        P(\ell|i, j) = \text{Softmax}(\text{MLP}_{\text{label}}([\bm{l}_i; \bm{r}_j]))_{\ell}.
    \]
    Final predicted label of span $(i, j)$ is
    $\ell(i, j) = \mathop{\arg\max}_{\ell}{P(\ell|i, j)}$.

    \subsection{Training Objective}
    \label{Sec:TrainingObjective}
    Given a gold parsing tree $\mathcal{T}$ and its linearization $(d_1, d_2, \ldots, d_n)$,
    we can calculate the loss using the negative log-likelihood:
    \[
        \mathcal{L} = -\frac{1}{n} (\sum_{i = 1}^{n}{\log P(d_i | i)} + \sum_{(i, j, \ell) \in \mathcal{T}}{\log P(\ell|i, j)}).
    \]

    The loss function consists of two parts. One is the structure loss, which is only defined on
    the left child spans. The other one is the label loss, which is defined on
    all the spans in $\mathcal{T}$.

    \subsection{Tree Inference}
    \label{Sec:TreeInference}

    To reconstruct the tree structure from the predicted linearization
    $(d_1, d_2, \ldots, d_n)$, we must deal with illegal sequences.
    One solution is to convert an illegal linearization to a
    legal one, and then use Algorithm \ref{Alg:L2T} to recover the tree.
    However, the optimal converting algorithm is NP hard as
    discussed in Section \ref{Sec:Preliminaries}.
    We propose two approximate reconstruction methods,
    both of which are based on replacing line 5 of Algorithm \ref{Alg:L2T}.
    One is to find the largest $k$ satisfying $d_k \le i$,
    \[
        k \gets \max{\{k' \mid d_{k'} \le i, i < k' < j\}}.
    \]
    The other is to find the index $k$ of the smallest $d_{k}$
    (if there are multiple choices, we choose the largest one),
    \[
        k \gets \mathop{\arg\min}_{k'}d_{k'}.
    \]
    Both methods are applicable to legal situations,
    and they have similar performance in our empirical evaluations.
    The inference time complexity is $\mathcal{O}(n^2)$
    in the worst-case for unbalanced trees,
    while in average it is $\mathcal{O}(n \log n)$
    (which is the same as \citet{DBLP:conf/acl/SternAK17}).

    Finally, instead of reconstructing trees from
    linearization sequences $(d_1, d_2, \dots, d_n)$,
    we could have an accurate CKY-style decoding algorithm from
    probabilities $P(i|j)$ (Equation \ref{Eq:normalization}).
    Specifically, it maximizes the product of left child span probabilities,
    \[
        \mathcal{G}(i, j) = \max{\{P(i | k) \times \mathcal{G}(k, j) \mid i < k < j\}},
    \]
    where $\mathcal{G}(i, j)$ represents the highest probability of subtree with root node $(i, j)$.
    We can calculate $\mathcal{G}(0, n)$ using dynamic programming algorithm and
    back-trace the tree accordingly.
    The complexity is $\mathcal{O}(n^3)$.

    \subsection{More Discussions on Normalization}
    We can compare our locally normalized model
    (Equation \ref{Eq:normalization})
    with other probability factorizations of constituent trees
    (Figure \ref{Fig:model}).


    Global normalization (Figure \ref{Fig:model_1})
    performs marginalization over all
    candidate trees, which requires dynamic programming decoding.
    As a local model, our parser is a span-level
    factorization of the tree probability,
    and each factor only marginalizes over
    a linear number of items
    (i.e., the probability of span $(i,j)$ is normalized
    with all scores of $(i', j), i' < j$).
    It is easier to be parallelized and enjoys a much faster parsing speed.
    We will show that its performance is also competitive with global models.

    \citet{DBLP:conf/coling/TengZ18} studies two local normalized models
    over spans, namely the \emph{span model} and the \emph{rule model}.
    The span model simply considers individual spans independently
    (Figure \ref{Fig:model_2}) which may be the finest factorization.
    Our model lies between it and the global model.

    The rule model considers a similar normalization with our model.
    If it is combined with the top-down decoding \citep{DBLP:conf/acl/SternAK17},
    the two parsers look similar. \footnote{
        We thank an anonymous reviewer for pointing out the connection.
        The following discussions are based on his/her detailed reviews.
    }
    We discuss their differences.
    The rule model
    takes all ground truth spans from the gold trees,
    and for each span $(i, j)$, it compiles a probability
    $P((i,j)\gets(i, k)(k, j))$ for its ground truth split $k$.
    Our parser, on the other side, factorizes on each word.
    Therefore, for the same span $(i, j)$, their normalization
    is constrained within $(i, j)$, while ours is over all $i'<j$.
    The main advantage of our parser is simpler span
    representations (not depend on parent spans):
    it makes the parser easy to batch for
    sentences with different lengths and tree structures since each $d_i$
    can be calculated offline before training.

    \section{Experiments}
    \label{Sec:Experiments}

    \begin{table*}[t]
        \normalsize
        \begin{center}
            \begin{tabular}{lccccccccc}
                \hline
                Type       & NP             & VP             & S              & PP             & SBAR           & ADVP           & ADJP           & QP             & WHNP           \\
                Count      & 18630          & 8743           & 5663           & 5492           & 1797           & 1213           & 893            & 490            & 429            \\
                \hline\hline
                PSN Model  & 93.15          & 91.81          & 91.21          & 89.73          & 87.81          & 86.89          & 73.01          & 89.80          & 97.20          \\
                \hline
                Our Model  & \textbf{93.42} & \textbf{92.62} & \textbf{91.95} & \textbf{89.91} & \textbf{88.93} & \textbf{87.39} & \textbf{75.14} & \textbf{91.63} & \textbf{97.44} \\
                Difference & +0.27          & +0.81          & +0.74          & +0.18          & +1.12          & +0.50          & +2.13          & +1.83          & +0.24          \\
                \hline
            \end{tabular}
        \end{center}
        \caption{Comparison on different phrases types.
            Here we only list top nine types.}
        \label{Tab:CType}
    \end{table*}

    \subsection{Data and Settings}
    \paragraph{Datasets and Preprocessing}
    All models are trained on two standard benchmark treebanks,
    English Penn Treebank (PTB) \citep{DBLP:journals/coling/MarcusSM94} and Chinese Penn Treebank (CTB) 5.1.
    The POS tags are predicted using Stanford Tagger \citep{DBLP:conf/naacl/ToutanovaKMS03}.
    To clean the treebanks, we strip the leaf nodes with POS tag \texttt{-NONE-} from the two treebanks
    and delete the root nodes with constituent type \texttt{ROOT}.
    For evaluating the results, we use the standard evaluation tool
    \footnote{\url{http://nlp.cs.nyu.edu/evalb/}}.

    For words in the testing corpus but not in the training corpus,
    we replace them with a unique label \texttt{<UNK>}.
    We also replace the words in the training corpus with
    the unknown label \texttt{<UNK>} with probability
    $p_{\text{unk}}(w) = \frac{z}{z + c(w)}$, where $c(w)$ is the
    number of time word $w$ appears in the training corpus
    and we set $z = 0.8375$ as \citet{DBLP:conf/emnlp/CrossH16}.

    \paragraph{Hyperparameters}
    We use 100D GloVe embedding for PTB \citep{DBLP:conf/emnlp/PenningtonSM14}.
    For character encoding, we randomly initialize the character embeddings
    with dimension 64.

    We use Adam optimizer with initial learning rate 1.0 and epsilon $10^{-9}$.
    For LSTM encoder, we use a hidden size of 1024,
    with 0.2 dropout in all the feed-forward and recurrent connections.
    For Transformer encoder, we use the same hyperparameters as
    \citet{DBLP:conf/acl/KleinK18}.
    For split point representation, we apply two
    1024-dimensional hidden size feed-forward networks.
    All the dropout we use in the decoder layer is 0.33.
    We also use BERT \citep{DBLP:conf/naacl/DevlinCLT19}
    (uncased, 24 layers, 16 attention heads per layer and 1024-dimensional hidden vectors)
    and use the output of the last layer as the pre-trained word embeddings.
    \footnote{The source code for our model is publicly available: \texttt{\url{https://github.com/AntNLP/span-linearization-parser}}}

    \paragraph{Training Details}
    We use \texttt{PyTorch} as our neural network toolkit and
    run the code on a NVIDIA GeForce GTX Titan Xp GPU and Intel Xeon E5-2603 v4 CPU.
    All models are trained for up to 150 epochs with batch size 150 \citep{DBLP:conf/acl/ZhouZ19}.

    \begin{figure}[t]
        \centering
        \includegraphics[width=\linewidth]{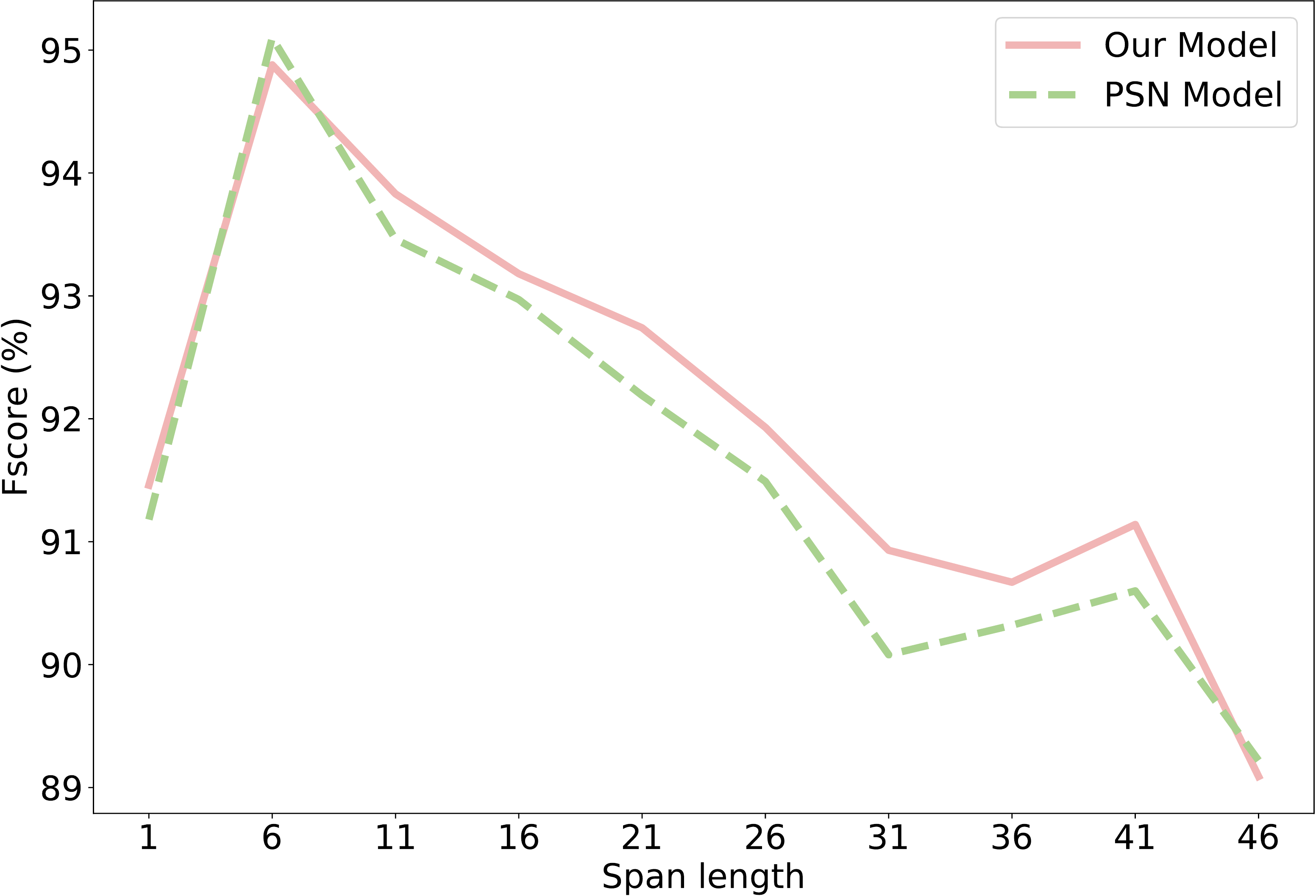}
        \caption{F1 scores against span length.
            Here the length $l$ represents lengths between $[l, l+4]$.}
        \label{Fig:span_length}
    \end{figure}

    \begin{table}[t]
        \normalsize
        \begin{center}
            \resizebox{\linewidth}{!}{
                \begin{tabular}{lccc}
                    \hline
                    Model                                                  & LR            & LP            & F1            \\
                    \hline
                    \textbf{Global Model}                                                                                  \\
                    \hline
                    \citet{DBLP:conf/acl/SternAK17}                        & 90.6          & 93.0          & 91.8          \\
                    \citet{DBLP:conf/naacl/GaddySK18}                      & -             & -             & 92.1          \\
                    \citet{DBLP:conf/acl/KleinK18}$^\spadesuit$            & 93.2          & \textbf{93.9} & 93.6          \\
                    \citet{DBLP:conf/acl/ZhouZ19}$^\spadesuit$             & \textbf{93.6} & \textbf{93.9} & \textbf{93.8} \\
                    \hline
                    \textbf{Local Model}                                                                                   \\
                    \hline
                    \citet{DBLP:conf/naacl/VilaresAS19}                    & -             & -             & 90.6          \\
                    \citet{DBLP:conf/aaai/LiuZS18}                         & -             & -             & 91.2          \\
                    \citet{DBLP:journals/ieicet/MaTLZS17}                  & -             & -             & 91.5          \\
                    \citet{DBLP:conf/acl/BengioSCJLS18}                    & 91.7          & 92.0          & 91.8          \\
                    \citet{DBLP:journals/tacl/LiuZ17a}                     & -             & -             & 91.8          \\
                    \citet{DBLP:conf/acl/HongH18}                          & 91.5          & 92.5          & 92.0          \\
                    \citet{DBLP:conf/coling/TengZ18}                       & 92.2          & 92.5          & 92.4          \\
                    \citet{DBLP:conf/naacl/DyerKBS16}$^\heartsuit$         & -             & -             & 92.4          \\
                    \citet{DBLP:conf/emnlp/SternFK17}$^\heartsuit$         & 92.6          & 92.6          & 92.6          \\
                    \hline
                    \textbf{Our Model}                                     & 92.3          & 92.9          & 92.6          \\
                    \textbf{Our Model}$^\spadesuit$                        & \textbf{93.3} & \textbf{94.1} & \textbf{93.7} \\
                    \hline\hline
                    \multicolumn{4}{l}{\textbf{Pre-training/Ensemble/Re-ranking} }                                         \\
                    \hline
                    \citet{DBLP:conf/aaai/LiuZS18}                         & -             & -             & 92.3          \\
                    \citet{DBLP:conf/emnlp/ChoeC16}                        & -             & -             & 93.8          \\
                    \citet{DBLP:journals/tacl/LiuZ17a}                     & -             & -             & 94.2          \\
                    \citet{DBLP:conf/acl/FriedSK17}                        & -             & -             & 94.7          \\
                    \citet{DBLP:conf/acl/KleinK18}$^\spadesuit$            & 94.9          & 95.4          & 95.1          \\
                    \citet{DBLP:journals/corr/abs-1812-11760}$^\spadesuit$ & 95.5          & 95.7          & 95.6          \\
                    \citet{DBLP:conf/acl/ZhouZ19}$^\spadesuit$             & \textbf{95.7} & 96.0          & \textbf{95.8} \\
                    \hline
                    \textbf{Our Model} (+BERT)                             & 95.6          & 96.0          & \textbf{95.8} \\
                    \textbf{Our Model} (+BERT)$^\spadesuit$                & 95.5          & \textbf{96.1} & \textbf{95.8} \\
                    \hline
                \end{tabular}
            }
        \end{center}
        \caption{ Final results on the PTB test set. $^\spadesuit$ means the models use Transformer as their encoder.
            $^\heartsuit$ means generative models.}
        \label{Tab:PTB_test}
    \end{table}

    \begin{table}[t]
        \normalsize
        \begin{center}
            \resizebox{\linewidth}{!}{
                \begin{tabular}{lccc}
                    \hline
                    Model                                                  & LR            & LP            & F1            \\
                    \hline
                    \textbf{Global Model}                                                                                  \\
                    \hline
                    \citet{DBLP:journals/corr/abs-1812-11760}$^\spadesuit$ & 91.6          & 92.0          & 91.8          \\
                    \citet{DBLP:conf/acl/ZhouZ19}$^\spadesuit$             & 92.0          & 92.3          & 92.2          \\
                    \hline
                    \textbf{Local Model}                                                                                   \\
                    \hline
                    \citet{DBLP:conf/naacl/DyerKBS16}                      & -             & -             & 84.6          \\
                    \citet{DBLP:conf/aaai/LiuZS18}                         & -             & -             & 85.4          \\
                    \citet{DBLP:journals/tacl/LiuZ17}                      & 85.2          & 85.9          & 85.5          \\
                    \citet{DBLP:conf/naacl/VilaresAS19}                    & -             & -             & 85.6          \\
                    \citet{DBLP:journals/tacl/LiuZ17a}                     & -             & -             & 86.1          \\
                    \citet{DBLP:conf/acl/BengioSCJLS18}                    & 86.4          & 86.6          & 86.5          \\
                    \citet{DBLP:conf/acl/FriedK18}                         & -             & -             & 87.0          \\
                    \citet{DBLP:conf/coling/TengZ18}                       & 87.1          & 87.5          & 87.3          \\
                    \hline
                    \textbf{Our Model}                                     & \textbf{92.2} & \textbf{92.7} & \textbf{92.4} \\
                    \textbf{Our Model}$^\spadesuit$                        & 92.1          & 92.3          & 92.2          \\
                    \hline
                \end{tabular}
            }
        \end{center}
        \caption{Final results on the CTB test set. $^\spadesuit$ means the models use Transformer as their encoder.}
        \label{Tab:CTB_test}
    \end{table}

    \subsection{Main Results}
    Table \ref{Tab:PTB_test} shows the final results on PTB test set.
    Our models (92.6 F1 with LSTM, 93.7 F1 with Transformer)
    significantly outperform the single locally normalized models.
    Compared with globally normalized models, our models also outperform those
    parsers with LSTM encoder and achieve a competitive result
    with Transformer encoder parsers.
    With the help of BERT \citep{DBLP:journals/corr/abs-1810-04805},
    our models with two encoders both achieve the same performance (95.8 F1) as
    the best parser \citep{DBLP:conf/acl/ZhouZ19}.
    Table \ref{Tab:CTB_test} shows the final results on CTB test set.
    Our models (92.4 F1) also significantly outperform local models and
    achieve competitive result amongst global models.

    Compared with \citet{DBLP:conf/coling/TengZ18} which does local normalization on single span,
    our model increases 0.2 F1 on PTB,
    which shows that doing normalization on more spans is really better.
    Our model also significantly outperforms \citet{DBLP:conf/acl/BengioSCJLS18}
    which predicts the syntactic distance of a tree.
    This indicates the superiority of our linearization method directly tied on the spans.

    \subsection{Evaluation}
    \label{Evaluation}
    To better understand the extent to which our model transcends
    the locally normalized model which does normalization on a single
    span described in \citet{DBLP:conf/coling/TengZ18},
    we do several experiments to compare the performance about
    different lengths of spans and different constituent types.

    In order to make a fair comparison, we implement their model
    by ourselves using the same LSTM encoder as ours.
    Besides, we ignore the LSTM for label prediction and complex span representations
    in their models and use simpler settings.
    Our own implementation achieves the same result as they report (92.4 F1).
    For convenience, we call their model per-span-normalization
    (PSN for short) model in the following.

    \paragraph{Influence of Span Length}
    First, we analyse the influence of different lengths of spans
    and the results are shown in Figure \ref{Fig:span_length}.
    We find that for sentences of lengths between $[11, 45]$,
    our model significantly outperforms PSN model.
    For short spans, PSN model only needs to consider few spans,
    which is more local and it is enough for the per-span-normalization
    to handle this situation.
    For long spans, our model needs to do normalization on more spans
    and the state space becomes large linearly.
    So the accuracy decreases fast, and there is no advantage compared with
    PSN model which uses CKY algorithm for inference.
    For spans of other lengths, our locally normalized method
    can take all spans with the same right boundary into consideration
    and add sum-to-one constraints on their scores.
    As a result, our model outperforms PSN
    model even without the help of accurate inference.

    \begin{table}[t]
        \normalsize
        \begin{center}
            \begin{tabular}{lccc}
                \hline
                Model                               & LR             & LP             & F1             \\
                \hline
                Full model                          & \textbf{92.31} & 92.87          & \textbf{92.59} \\
                \hline\hline
                - $\text{MLP}_l$ and $\text{MLP}_r$ & 92.15          & 92.72          & 92.43          \\
                - normalization                     & 91.25          & \textbf{92.93} & 92.08          \\
                + label linearization               & 90.79          & 91.56          & 91.17          \\
                \hline
            \end{tabular}
        \end{center}
        \caption{Ablation test on the PTB test set.
            Here we use the same settings as in Section \ref{Evaluation}.}
        \label{Tab:Ablation}
    \end{table}

    \paragraph{Influence of Constituent Type}
    Then we compare the accuracy of different constituent types.
    Table \ref{Tab:CType} shows the results of nine types which occur
    most frequently.
    Our model all performs better than PSN model,
    especially in types SBAR, ADJP and QP.
    When optimizing the representation of one split point,
    our model can consider all of the words before it,
    which can be helpful to predict some types.
    For example, when we predict an adjective phrase (ADJP),
    its representation has fused the words' information before it
    (e.g. linking verb like ``\emph{is}''), which can narrow the scope of prediction.

    \subsection{Ablation Study}

    \begin{table}[t]
        \normalsize
        \begin{center}
            \resizebox{\linewidth}{!}{
                \begin{tabular}{lccc}
                    \hline
                    Inference Algorithm                   & LR             & LP             & F1             \\
                    \hline
                    $\mathcal{G}(i, j)$                   & 92.31          & 92.87          & \textbf{92.59} \\
                    $k = \max{\{k' \mid d_{k'} \le i\}}$  & \textbf{92.39} & 92.75          & 92.57          \\
                    $k = \mathop{\arg\min}_{k'} {d_{k'}}$ & 91.93          & \textbf{93.21} & 92.57          \\
                    \hline
                \end{tabular}
            }
        \end{center}
        \caption{Results of different inference algorithms described in Section \ref{Sec:TreeInference}.}
        \label{Tab:Inference}
    \end{table}

    \begin{table}[t]
        \normalsize
        \begin{center}
            \resizebox{\linewidth}{!}{
                \begin{tabular}{lc}
                    \hline
                    Model                                                            & sents/sec \\
                    \hline
                    \textbf{Global Model}                                                        \\
                    \hline
                    \citet{DBLP:conf/acl/SternAK17}                                  & 20        \\
                    \citet{DBLP:conf/acl/KleinK18}$^\spadesuit$ (w. \texttt{Cython}) & 150       \\
                    \citet{DBLP:conf/acl/ZhouZ19}$^\spadesuit$ (w. \texttt{Cython})  & 159       \\
                    \hline
                    \textbf{Local Model}                                                         \\
                    \hline
                    \citet{DBLP:conf/coling/TengZ18}                                 & 22        \\
                    \citet{DBLP:conf/acl/SternAK17}                                  & 76        \\
                    \citet{DBLP:journals/tacl/LiuZ17}                                & 79        \\
                    \citet{DBLP:conf/acl/BengioSCJLS18}                              & 111       \\
                    \citet{DBLP:conf/acl/BengioSCJLS18} (w/o tree inference)         & 351       \\
                    \citet{DBLP:conf/naacl/VilaresAS19}                              & 942       \\
                    \hline
                    \textbf{Our Model}                                               & 220       \\
                    \textbf{Our Model}$^\spadesuit$                                  & 155       \\
                    \hline
                \end{tabular}
            }
        \end{center}
        \caption{Parsing speeds on the PTB test set. $^\spadesuit$ means the models use Transformer as their encoders.
            ``w. \texttt{Cython}'' stands for using \texttt{Cython} to optimize the python code.
            ``w/o tree inference'' stands for evaluating without tree inference.
            The model in \citet{DBLP:conf/acl/KleinK18} is ran by ourselves,
            and other speeds are extracted from their original papers.}
        \label{Tab:speed}
    \end{table}

    We perform several ablation experiments by modifying the structure of
    the decoder layer.
    The results are shown in Table \ref{Tab:Ablation}.

    First, we delete the two different split point representations described
    in Equation \eqref{Eq:splitpointrep} and directly use the output of LSTM
    as the final representation.
    Final performance slightly decreases, which indicates that distinguishing the
    representations of left and right boundaries of a span is really helpful.

    Then we delete the local normalization on partial spans
    and only calculate the probability of each span to be a left child.
    The inference algorithm is the same as our full model.
    Final result decreases by 0.5 F1, despite improvement on precision.
    This might be because our normalization method can add constraints on all the spans
    with the same right boundary,
    which makes it effective when only one span is correct.

    Finally, we try to predict the labels sequentially,
    which means assigning each split $i$ a tuple $(d_i, \ell_i^{\text{left}}, \ell_i^{\text{right}})$,
    where $\ell_i^{\text{left}}$ and $\ell_i^{\text{right}}$ represent the labels
    of the longest spans ending and starting with $i$ in the tree, respectively.
    This may make our model become a sequence labeling model
    similar to \citet{DBLP:conf/emnlp/Gomez-Rodriguez18}.
    However, the performance is very poor, and this is largely
    due to the loss of structural information in the label prediction.
    Therefore, how to balance efficiency and label prediction accuracy
    might be a research problem in the future.

    \subsection{Inference Algorithms}
    We compare three inference algorithms described in Section \ref{Sec:TreeInference}.
    The results are shown in Table \ref{Tab:Inference}.
    We find that different inference algorithms
    have no obvious effect on the performance,
    mainly due to the powerful learning ability of our model.
    Thus we use the third method which is the most convenient to implement.

    \subsection{Parsing Speed}
    The parsing speeds of our parser and other parsers are shown in Table \ref{Tab:speed}.
    Although our inference complexity is $\mathcal{O}(n \log n)$,
    our speed is faster than other local models,
    except \citet{DBLP:conf/acl/BengioSCJLS18} which evaluates without tree inference
    and \citet{DBLP:conf/naacl/VilaresAS19} which utilizes a pure sequence tagging framework.
    This is mainly due to the simplicity of our model
    and the parallelism of matrix operations for structure prediction.
    Compared with globally normalized parsers like \citet{DBLP:conf/acl/ZhouZ19} and \citet{DBLP:conf/acl/KleinK18},
    our model is also faster even if they use optimization for python code (e.g. \texttt{Cython}
    \footnote{\texttt{\url{https://cython.org/}}}).
    Other global model like \citet{DBLP:conf/acl/SternAK17}
    which infers in $O(n^3)$ complexity is much slower than ours,
    and this shows the superiority of our linearization in speed.

    \section{Related Work}
    \label{Sec:RelatedWork}
    Globally normalized parsers often have high performance on constituent parsing
    due to their search on the global state space
    \citep{DBLP:conf/acl/SternAK17, DBLP:conf/acl/KleinK18, DBLP:conf/acl/ZhouZ19}.
    However, they suffer from high time complexity and are difficult to parallelize.
    Thus many efforts have been made to optimize their efficiency \citep{DBLP:journals/tacl/VieiraE17}.

    Recently, the rapid development of encoders
    \citep{DBLP:journals/neco/HochreiterS97, DBLP:conf/nips/VaswaniSPUJGKP17}
    and pre-trained language models
    \citep{DBLP:journals/corr/abs-1810-04805}
    have enabled local models to achieve similar performance as global models.
    \citet{DBLP:conf/coling/TengZ18} propose two local models,
    one does normalization on each candidate span and one on each grammar rule.
    Their models even outperform the global model in \citet{DBLP:conf/acl/SternAK17}
    thanks to the better representation of spans.
    However, they still need an $\mathcal{O}(n^3)$ complexity inference algorithm
    to reconstruct the final parsing tree.

    Meanwhile, many work do research on faster sequential models.
    Transition-based models predict a sequence of actions and achieve
    an $\mathcal{O}(n)$ complexity
    \citep{DBLP:conf/acl/WatanabeS15, DBLP:conf/emnlp/CrossH16, DBLP:journals/tacl/LiuZ17a}.
    However, they suffer from the issue of error propagation and cannot
    be parallel.
    Sequence labeling models regard tree prediction as sequence prediction problem
    \citep{DBLP:conf/emnlp/Gomez-Rodriguez18, DBLP:conf/acl/BengioSCJLS18}.
    These models have high efficiency, but their linearizations have no direct
    relation to the spans, so the performance is much worse than span-based models.

    We propose a novel linearization method closely related to the spans
    and decode the tree in $\mathcal{O}(n \log n)$ complexity.
    Compared with \citet{DBLP:conf/coling/TengZ18}, we do normalization on more spans,
    thus achieve a better performance.

    In future work, we will apply graph neural network
    \citep{DBLP:conf/iclr/VelickovicCCRLB18, DBLP:conf/acl/JiWL19, DBLP:conf/acl/SunGWGJLSD19}
    to enhance the span representation.
    Due to the excellent properties of our linearization,
    we can jointly learn constituent parsing and dependency parsing
    in one graph-based model.
    In addition, there is also a right linearization
    defined on the set of right child spans.
    We can study how to combine the two linear representations to
    further improve the performance of the model.

    \section{Conclusion}
    In this work, we propose a novel linearization of constituent trees
    tied on the spans tightly.
    In addition, we build a new normalization method,
    which can add constraints on all the spans with the same right boundary.
    Compared with previous local normalization methods,
    our method is more accurate for considering more span information,
    and reserves the fast running speed due to the parallelizable linearization model.
    The experiments show that our model significantly outperforms existing local models
    and achieves competitive results with global models.

    \section*{Acknowledgments}

    The authors would like to thank the reviewers for their helpful comments and suggestions.
    The authors would also like to thank Tao Ji and Changzhi Sun
    for their advices on models and experiments.
    The corresponding author is Yuanbin Wu.
    This research is
    (partially) supported by STCSM (18ZR1411500),
    the Foundation of State Key Laboratory of Cognitive Intelligence,
    iFLYTEK(COGOS-20190003),
    and an open research fund of KLATASDS-MOE.

    \bibliography{acl2020}
    \bibliographystyle{acl_natbib}

\end{CJK*}
\end{document}